\title{How Regularization Terms Make Invertible Neural Networks Bayesian Point Estimators\footnote{Preprint. Submitted to Inverse Problems (IOP Publishing), October 2025.}}
\author{Nick Heilenkötter\thanks{Center for Industrial Mathematics, University of Bremen, Germany, \texttt{heilenkoetter@uni-bremen.de}}}
\date{}
\newcommand\equalhat{\mathrel{\stackon[1.5pt]{=}{\stretchto{%
    \scalerel*[\widthof{=}]{\wedge}{\rule{1ex}{3ex}}}{0.5ex}}}}
\newtheorem{remark}{Remark}
\newtheorem{theorem}{Theorem}
\newtheorem{lemma}{Lemma}
\newtheorem{corollary}{Corollary}
\newcommand{\X}{X}
\newcommand{\Y}{Y}
\newcommand{\pnoise}{p_H}
\newcommand{\prior}{p_X}
\newcommand{\py}{p_Y}
\newcommand{\model}{%
  \mathchoice
    {{\varphi^{\!\resizebox{.5em}{.18em}{$\to$}}}} 
    {{\varphi^{\hspace{0.02em}\resizebox{.5em}{.18em}{$\to$}}}}  
    {{\varphi^{\!\resizebox{.5em}{.18em}{$\to$}}}}  
    {{\varphi^{\!\resizebox{.5em}{.18em}{$\to$}}}} 
}
\newcommand{\invmodel}{%
  \mathchoice
    {{\varphi^{\hspace{-0.05em}\resizebox{.5em}{.18em}{$\gets$}}}} 
    {{\varphi^{\resizebox{.5em}{.18em}{$\gets$}}}}  
    {{\varphi^{\hspace{-0.05em}\resizebox{.5em}{.18em}{$\gets$}}}}  
    {{\varphi^{\hspace{-0.05em}\resizebox{.5em}{.18em}{$\gets$}}}} 
}
\newcommand{\A}{A}
\newcommand{\Aadj}{\A^\ast}
\newcommand{\x}{x}
\newcommand{\y}{y}
\newcommand{\zdelta}{z^\delta}
\newcommand{\ydelta}{y^\delta}
\newcommand{\inv}{^{-1}}
\newcommand{\xmap}{\x_{\mathrm{MAP}}}
\newcommand{\xpm}{\x_{\mathrm{PM}}}
\newcommand{\xlogdet}{\x_{\mathrm{logdet}}}
\newcommand{\modelapprox}{\model_\mathrm{\!\!\!approx}}
\newcommand{\modelreco}{\model_\mathrm{\!\!\!reco}}
\newcommand{\modeldiv}{\model_\mathrm{\!\!\!div}}
\newcommand{\modelogdet}{\model_\mathrm{\!\!\!logdet}}
\newcommand{\invmodelogdet}{\invmodel_\mathrm{\!\!\!logdet}}
\newcommand{\deltachosen}{\hat{\delta}}
\DeclareMathOperator{\supp}{supp}
\DeclareMathOperator{\Lip}{Lip}
\DeclareMathOperator{\Diffeo}{Diff}
\newcommand{\AadjA}{\Aadj\!\A}
\newcommand{\Id}{\mathrm{Id}}
\newcommand{\priorpushed}{\model_{\mathrm{\!\!\!logdet}\#}\prior}
\newcommand{\diff}{\mathrm{d}}
\newcommand{\Diff}{\mathrm{D}}
\newcommand{\tr}{\mathrm{tr}}
\newcommand{\expect}{\mathbb{E}}
\newcommand{\norm}[1]{\left\lVert#1\right\rVert}
\newcommand{\R}{\mathbb{R}}
\begin{document}

\maketitle
\textbf{Abstract:}
Can regularization terms in the training of invertible neural networks lead to known Bayesian point estimators in reconstruction? Invertible networks are attractive for inverse problems due to their inherent stability and interpretability. Recently, optimization strategies for invertible neural networks that approximate either a reconstruction map or the forward operator have been studied from a Bayesian perspective, but each has limitations.
To address this, we introduce and analyze two regularization terms for the network training that, upon inversion of the network, recover properties of classical Bayesian point estimators: while the first can be connected to the posterior mean, the second resembles the MAP estimator.
Our theoretical analysis characterizes how each loss shapes both the learned forward operator and its inverse reconstruction map. Numerical experiments support our findings and demonstrate how these loss‑term regularizers introduce data-dependence in a stable and interpretable way.

\section*{Introduction}
Whenever a quantity of interest cannot be observed directly but only through an indirect measurement process or in the presence of noise, one is faced with an inverse problem. To stabilize the reconstruction and mitigate the information loss inherent in the measurement, it is necessary to incorporate additional knowledge about the unknown data — its prior distribution, which encodes what one expects the reconstruction to resemble, such as the characteristic features of natural images. Yet our ability to describe natural images in an explicit, algorithmic form remains quite limited. Fortunately, recent years have seen the emergence of data-driven approaches that enable the construction of priors directly from collections of representative samples.

\begin{figure}[ht]
    \centering
    \def\svgwidth{\linewidth}
    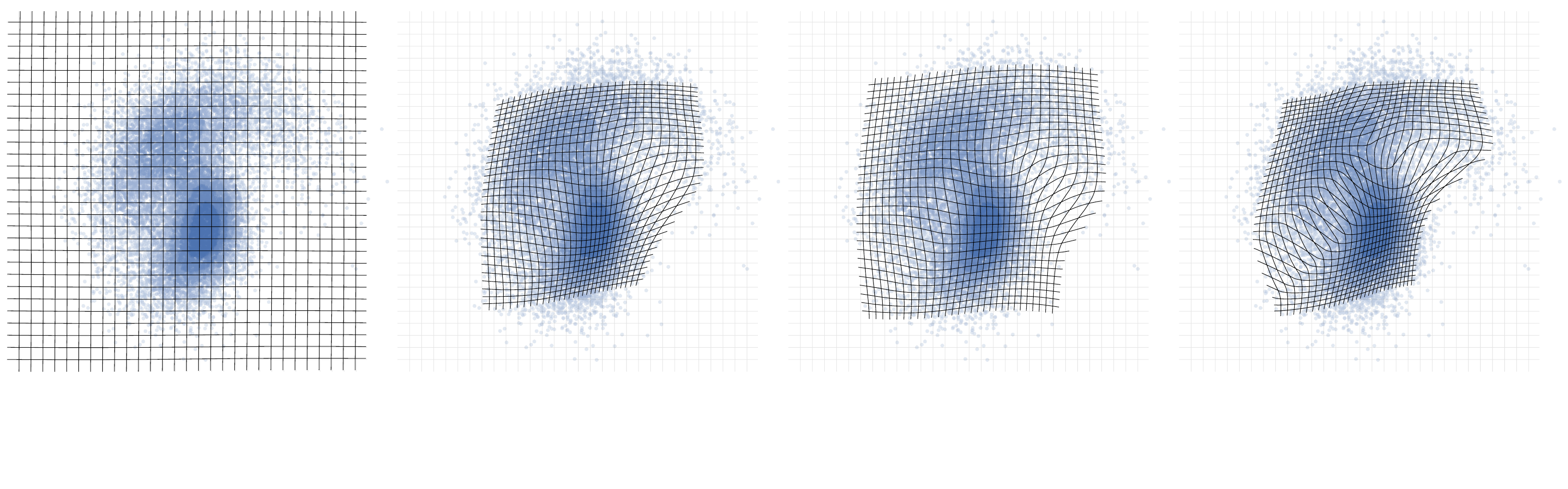
    \caption{How the reconstruction methods resulting from the studied training strategies map an equidistant grid (gray) when $\A = \Id$: 
    a) approximation training induces no regularization; 
    b) reconstruction training approximates the posterior mean; 
    c) log‑determinant regularization links to a smoothed posterior mean; 
    d) divergence‑based regularization approximates the MAP estimator, visibly pulling toward the peaks of the prior.}
    \label{fig:first_page_grids}
\end{figure}

While these approaches often surpass classical methods in reconstruction quality, many of them lack theoretical guarantees and remain difficult to interpret. A promising direction explored recently \cite{Arndt_2023, Arndt_2024, Arndt_2025, Izmailov_2020} involves invertible neural networks. Thanks to their bidirectional structure, a single network can simultaneously approximate the forward operator and serve as a reconstruction method, with stability ensured by the architecture itself. This hybrid use makes it possible to assess deviations from a known forward operator – or even replace it by a data-based version – while maintaining interpretability of the reconstruction process by the learned measurement model and vice versa. This dual capability is particularly relevant in applications where both high-fidelity reconstructions and a faithful representation of the measurement process are critical, such as scientific imaging and medical diagnostics. It can also be of interest when the evaluation of the forward operator is computationally expensive, which is common in PDE-based inverse problems.

Yet a fundamental question remains: Which training strategies enable effective learning in both directions simultaneously? Prior work \cite{Arndt_2023, Arndt_2024} has analyzed training strategies that approximate the forward operator to obtain convergent regularizations, yielding strong performance in low-noise settings. However, such approaches lack data-dependent regularization, which induces prior knowledge and becomes essential in high-noise regimes. In contrast, directly training the reconstruction introduces data-dependent regularization by approximating the posterior mean, but often performs worse in operator approximation, i.e. data consistency, and can lead to ``regression to the mean'', manifesting in lower visual fidelity \cite{Delbracio_2023}.

Motivated by these limitations, we seek alternative training strategies that combine forward operator approximation and good reconstruction quality within a single invertible network. In this paper, we present a theoretical study of two regularization terms that embed data‑dependent priors into the training objective while preserving forward operator approximation. To the best of our knowledge, explicitly using such regularization penalties to impart Bayesian‑style data dependence in invertible network training is novel. Although our focus is theoretical, these concepts can be applied in a large range of practical challenges in inverse problems; we discuss possible use cases in the conclusion.
Throughout our analyses, we focus on an intuition-driven perspective and aim to provide insights into the concepts behind different Bayesian point estimators. We begin our search with an approach motivated by the structure of Normalizing Flows and analyze the resulting reconstruction method, exposing a close connection to the posterior mean point estimate. Finally, for a broader class of problems, we introduce a modified regularization term and show that, under certain conditions, it yields the well-known MAP estimate, resulting in the desired combination of accurate forward operator approximation and high-quality reconstruction. In summary, this work proposes a new class of loss functions that reconcile these competing objectives and provides rigorous analysis to support their understanding and practical deployment in inverse problems.

\subsubsection*{Paper Structure and Key Contributions}

In our effort to introduce regularization terms in neural network training for inverse problems and understand their behavior, we first provide a concise summary of the problem setup, which also defines the assumptions underlying the subsequent theorems. We then present theoretical preliminaries, including a discussion of Bayesian point estimators and their interpretation as score-based regularizations, and we prove an extension of Tweedie’s formula to linear inverse problems. The main part of this work presents and analyzes two regularization terms for the training of the model $\model$ from a theoretical perspective: for $\log|\det\Diff\model|$, which is motivated by Normalizing Flow approaches, we establish a connection to posterior mean estimates; for the regularization with $\nabla\cdot\model$, we derive a result demonstrating a close relation to MAP estimation. We conclude by validating these findings through numerical toy examples and discussing their practical implications for reconstruction with invertible neural networks.

\section{Problem Setting and Assumptions}
We consider linear inverse problems with additive Gaussian noise, i.e.,
\begin{align}\label{problem:orig}
    \ydelta = \A\x+\eta, \quad \eta\sim\pnoise\equalhat\mathcal{N}(0,\delta^2)
\end{align}
for $\x \in \X$, where $\A : \X \to \Y$ may be ill-conditioned. Since neural networks operate on finite-dimensional spaces in practice, we restrict our analysis to the case where $\X$ and $\Y$ are finite-dimensional vector spaces, and identify $\X\equalhat \R^{n_\X}$ and $\Y\equalhat\R^{n_\Y}$.

In the Bayesian framework, we assume that $\x$ and the noise $\eta$ are independent, and that $\x$ is distributed according to a prior with a continuously differentiable density $\prior: \X \to \mathbb{R}_{>0}$.\footnote{To shorten the notation, we assume that the prior has global support. Alternatively, all results could be restricted to $\mathrm{supp}\,\prior$, which is the practically relevant subset of $\X$.} In practice, this prior may encode the structure of natural images or prior knowledge about physical parameters. While the prior is typically not known in closed form, we assume it is implicitly accessible through training data. We further assume that $\prior$ decays sufficiently fast to ensure the existence of all expectations involved in the loss functions considered below.

If invertible network architectures are used, the input and output spaces must coincide. In particular, when $\Y=\X$, those architectures can be applied directly to approximate either forward or reconstruction operators. Otherwise, we follow the approach of \cite{Arndt_2023,Arndt_2024}, assume knowledge of the adjoint operator and consider the normal equation
\begin{align}\label{problem:normal}
    \zdelta = \Aadj \ydelta = \AadjA\x+\Aadj \eta,
\end{align}
where $\zdelta\in\X$. This can be understood as a new inverse problem with forward operator $\AadjA$ and noise $\Aadj\eta$. Note that while $\AadjA$ is typically more ill-conditioned than the original formulation, we have $\ker(\AadjA)=\ker(\A)$, so no information is lost.

The goal of this work is to optimize such bijective networks $\model:\X\to\X$ that approximate the forward mapping $\x\mapsto \zdelta$ (or $\x\mapsto\ydelta$, where stated explicitly), so that the inverse network can subsequently be used for reconstruction:
\begin{align*}
    \x^\ast = \invmodel(\zdelta),\quad \text{ where } \invmodel:=(\model)\inv.
\end{align*}
In our theoretical analysis, we optimize models $\model \in \Diffeo^2(\X)\subset C^2(\X,\X)$, where $\Diffeo^2(\X)$ contains all bijective $C^2$-functions with $C^2$-inverse. 
The models are trained on datasets of pairs $\{(\x_i, \ydelta_i)\}_{i=1,\ldots,N}$, where $\x_i \sim \prior$ and $\ydelta_i = \A \x_i + \eta_i$. In the following, we consider the large-data limit, replacing empirical averages by expectations over the data distribution. We verify generalization to finite datasets numerically.

\section{Preliminaries}
\begin{figure}
  \centering
  \def\svgwidth{\linewidth}
  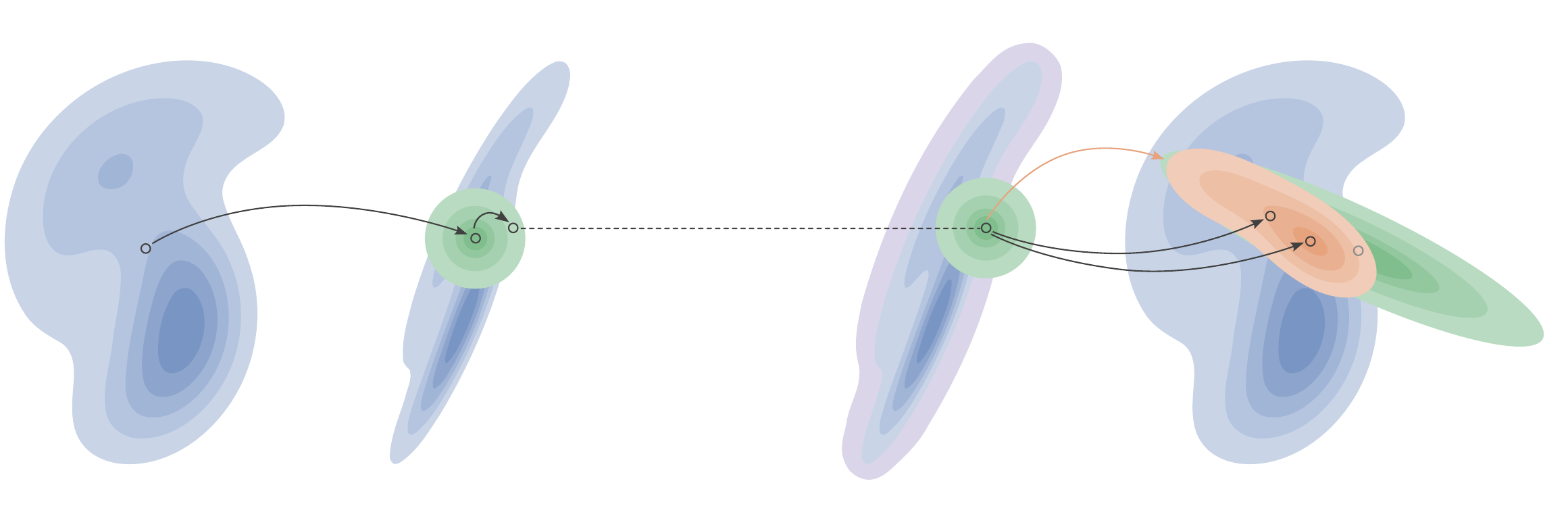
  \caption{The Bayesian inverse problem framework: The forward model maps $\x\sim\prior$ to its measurement, which is corrupted by noise. Given this noisy data, the posterior density is computed as the product of prior and likelihood.}
  \label{fig:bayesian}
\end{figure}

Inverse problems have been solved for many years using classical regularization theory, where the goal is to find regularization operators that approximate the inverse of the forward operator while preserving stability, controlling reconstruction error. In recent years, due to the rise of deep learning and data-based methods, many works take a Bayesian perspective on inverse problems. Figure \ref{fig:bayesian} illustrates this perspective. Similar to the setting that we have introduced above, this framework's key assumptions are that the desired unobserved quantities $\x$ as well as the noise $\eta$ are distributed according to (possibly unknown) probability distributions, here denoted by densities $\prior$ and $\pnoise$. Bayes' law then allows to compute a posterior distribution
\begin{align}\label{eq:posterior}
    p(\x|\ydelta) = \frac{1}{\py(\ydelta)} \pnoise(\A\x-\ydelta)\,\prior(\x),
\end{align}
determining the probability that certain reconstructions $\x$ have led to the measurement $\y$. Here, $\py(\ydelta)=\int_\X  \prior(\x)\,\pnoise(\A\x-\ydelta)\,\diff\x$ is the density of the measured data $\ydelta$. In this framework, regularization methods arise as point estimators of the posterior distribution, with well-known classical methods as special cases.

\subsection{Bayesian Point Estimators}
A popular point estimator is the maximum-a-posteriori (MAP) estimate. In our setting of Gaussian noise, maximizing the posterior (\ref{eq:posterior}) for given $\ydelta$ is equivalent to
\begin{align*}
    \xmap=\arg\min_{\x} \frac{1}{2}\norm{\A\x-\ydelta}^2-\delta^2\log\prior(\x),
\end{align*}
resembling classical variational methods. The first-order optimality condition reads
\begin{align}
    \Aadj(\A\xmap -\ydelta) - \delta^2\nabla_\x (\log\prior)(\xmap) &= 0 \nonumber \\
    \Leftrightarrow \quad (\AadjA - \delta^2\nabla_\x (\log\prior))(\xmap) &= \zdelta, \label{eq:map}
\end{align}
which we can interpret as a normal equation with prior-based regularization of the forward operator. Here, the prior $\prior$ comes into play through its \emph{score function} $\nabla\log\prior$. For a Gaussian prior, this corresponds to the popular classical Tikhonov regularization. Note that this first-order condition is sufficient for the minimizer only if the variational formulation is strictly convex.

Another important point estimator is the posterior mean (PM):
\begin{align*}
    \xpm=\expect_{p(\x|\ydelta)}(\x).
\end{align*}
Extending Tweedie's formula \cite{Efron_2011}, we show that this implies a score-regularized equation in a similar fashion:
\begin{lemma}\label{thm:pm}
    Since we have $p(\x|\ydelta)=p(\x|\zdelta)$ for $\zdelta=\Aadj\ydelta$, the posterior mean for the original problem (\ref{problem:orig}) and the normal equation (\ref{problem:normal}) are equivalent.
    Further, the posterior mean point estimator satisfies
    \begin{align*}
        \A\xpm = \ydelta + \delta^2\nabla_\y(\log\py)(\ydelta).
    \end{align*}
\end{lemma}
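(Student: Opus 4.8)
The plan is to handle the two assertions in sequence. The first is a sufficiency statement: I would show that the Gaussian likelihood factors through $\zdelta=\Aadj\ydelta$, so the posterior depends on the data only through $\zdelta$ and hence the two posterior means coincide. The second is a Tweedie-type identity, which I would obtain by viewing $\ydelta = \A\x + \eta$ as a pure denoising problem for the random variable $w:=\A\x$ (whose law is the pushforward $\A_\#\prior$), differentiating the resulting marginal $\py$, and finally using linearity of the conditional expectation to turn $\expect[\A\x\mid\ydelta]$ into $\A\xpm$.

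For the first part, expanding the Gaussian density gives
\begin{align*}
    \pnoise(\A\x-\ydelta) = (2\pi\delta^2)^{-n_\Y/2}\exp\!\left(-\tfrac{1}{2\delta^2}\norm{\A\x}^2 + \tfrac{1}{\delta^2}\langle\x,\Aadj\ydelta\rangle - \tfrac{1}{2\delta^2}\norm{\ydelta}^2\right),
\end{align*}
so that $\pnoise(\A\x-\ydelta) = h(\ydelta)\,g(\x,\zdelta)$ with $h$ independent of $\x$ and $g$ depending on the data only through $\zdelta=\Aadj\ydelta$. Substituting into Bayes' formula (\ref{eq:posterior}), the factor $h(\ydelta)$ cancels against the normalizing constant $\py(\ydelta)=\int g(\x',\zdelta)\,\prior(\x')\,\diff\x' \cdot h(\ydelta)$, leaving a posterior that is a function of $\zdelta$ alone; this is the asserted identity $p(\x|\ydelta)=p(\x|\zdelta)$, and taking expectations gives $\xpm = \expect[\x\mid\ydelta] = \expect[\x\mid\zdelta]$.

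For the Tweedie identity, set $w:=\A\x\sim\A_\#\prior$, so $\ydelta = w + \eta$ and the marginal is the convolution $\py = \pnoise * (\A_\#\prior)$. Using $\nabla\pnoise(u) = -\delta^{-2}u\,\pnoise(u)$ and interchanging gradient and integration,
\begin{align*}
    \nabla_\y\py(\ydelta) = -\tfrac{1}{\delta^2}\int (\ydelta - w)\,\pnoise(\ydelta-w)\,\diff(\A_\#\prior)(w) = -\tfrac{1}{\delta^2}\Bigl(\ydelta\,\py(\ydelta) - \py(\ydelta)\,\expect[w\mid\ydelta]\Bigr),
\end{align*}
where the second step uses $p(w\mid\ydelta)\,\py(\ydelta) = \pnoise(\ydelta-w)\,\diff(\A_\#\prior)(w)$. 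Dividing by $\py(\ydelta)$ and rearranging yields $\nabla_\y(\log\py)(\ydelta) = \delta^{-2}\bigl(\expect[w\mid\ydelta] - \ydelta\bigr)$. Finally, since $\A$ is linear and $\xpm$ is finite under the standing decay assumption on $\prior$, linearity of the conditional expectation gives $\expect[w\mid\ydelta] = \A\,\expect[\x\mid\ydelta] = \A\xpm$, and combining the two displays gives $\A\xpm = \ydelta + \delta^2\nabla_\y(\log\py)(\ydelta)$.

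The main obstacle is purely technical: justifying the interchange of $\nabla_\y$ with the integral against $\A_\#\prior$ and the finiteness of the first moment of $p(\cdot\mid\ydelta)$ under $\A$. This is exactly what the assumption that $\prior$ "decays sufficiently fast" is meant to supply, and a dominated-convergence argument exploiting the super-exponential decay of $u\mapsto u\,\pnoise(u)$ closes the gap. I would also remark that $\py$ is well-defined and smooth even when $\A$ is not surjective, because convolution with the Gaussian $\pnoise$ regularizes the (possibly singular) measure $\A_\#\prior$; the argument never needs $\A_\#\prior$ to admit a Lebesgue density.
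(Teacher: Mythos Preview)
Your proof is correct, and both parts take a somewhat different route from the paper. For the first assertion, you argue via a Fisher--Neyman factorization: expanding the quadratic in the Gaussian shows the likelihood depends on $\ydelta$ only through $\Aadj\ydelta$, so $\zdelta$ is sufficient and the posterior (hence its mean) is unchanged. The paper instead decomposes $\ydelta=\ydelta_\parallel+\ydelta_\bot$ orthogonally along $\mathrm{im}(\A)$ and $\ker(\Aadj)$, shows $p(\x\mid\ydelta)=p(\x\mid\ydelta_\parallel)$, and then uses that $\Aadj$ restricted to $\mathrm{im}(\A)$ is a bijection onto its image. Your sufficiency argument is shorter and makes the role of $\zdelta$ as a sufficient statistic explicit; the paper's decomposition is more geometric and has the minor advantage of identifying exactly \emph{which} part of $\ydelta$ is discarded. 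For the Tweedie identity, you differentiate the convolution $\py=\pnoise\ast\A_\#\prior$ directly, using $\nabla\pnoise(u)=-\delta^{-2}u\,\pnoise(u)$ and then pulling $\A$ out of the conditional expectation by linearity. The paper follows the classical Efron-style derivation: it sets $\lambda(\ydelta)=\log\bigl(\py(\ydelta)/\pnoise(\ydelta)\bigr)$ and computes $\nabla_\y\lambda$ in two ways, one yielding $\delta^{-2}\A\xpm$ and the other $\nabla_\y\log\py+\delta^{-2}\ydelta$. Both arguments invoke dominated convergence at the same point; yours is marginally more direct, while the paper's keeps closer contact with the standard exponential-family presentation of Tweedie's formula. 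Your remark that $\A_\#\prior$ need not have a Lebesgue density (convolution with $\pnoise$ regularizes) is a nice observation the paper leaves implicit.
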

\begin{proof}
We start by confirming that $p(\x|\zdelta) = p(\x|\ydelta)$: Let $\ydelta = \ydelta_\parallel + \ydelta_\bot$ be the orthogonal decomposition of $\ydelta$ into $\ydelta_\parallel\in\mathrm{im}(\A)=\ker(\Aadj)^\bot$ and $\ydelta_\bot\in\mathrm{im}(\A)^\bot=\ker(\Aadj)$. Then, we have that
\begin{align*}
    \pnoise(\A\x-\ydelta) &= \frac{1}{(\delta\sqrt{2\pi})^n} \exp\left(-\frac{1}{2\delta^2}\norm{\A\x-\ydelta_\parallel - \ydelta_\bot}^2\right) \\
    &=  \frac{1}{(\delta\sqrt{2\pi})^n} \exp\left(-\frac{1}{2\delta^2}\norm{\A\x-\ydelta_\parallel}^2 + \underbrace{\langle \A\x-\ydelta_\parallel, \ydelta_\bot\rangle}_{=0}- \frac{1}{2\delta^2}\norm{\ydelta_\bot}^2\right) \\
    &= \frac{1}{(\delta\sqrt{2\pi})^n} \exp\left(-\frac{1}{2\delta^2}\norm{\A\x-\ydelta_\parallel}^2\right)\,\underbrace{\exp\left( - \frac{1}{2\delta^2}\norm{\ydelta_\bot}^2\right)}_{=:\kappa(\ydelta_\bot)} \\
    &= \pnoise(\A\x-\ydelta_\parallel) \,\kappa(\ydelta_\bot),
\end{align*}
and therefore
\begin{align*}
    p(\x|\ydelta) &= \frac{\pnoise(\A\x-\ydelta_\parallel)\,\cancel{\kappa(\ydelta_\bot)}\,\prior(\x)}{\int_\X\pnoise(\A\x'-\ydelta_\parallel)\,\cancel{\kappa(\ydelta_\bot)}\,\prior(\x')\,\diff\x'} \\
    &= p(\x|\ydelta_\parallel).
\end{align*}
As a result, we can consider the restriction $\Aadj_\parallel$ of $\Aadj$ to $\mathrm{im}(\A)$, which is an invertible linear operator. All in all, we have $p(\x|\zdelta) = p(\x|(\Aadj_\parallel)\inv\zdelta) = p(\x|\ydelta_\parallel)=p(\x|\ydelta)$.

To show the second part, we extend the proof of Tweedie's formula for Gaussian denoising to inverse problems. First, we note that
\begin{align*}
    \py(\ydelta) &= \int_\X \prior(\x)\, \frac{1}{(\delta\sqrt{2\pi})^n}\exp\!\left(-\frac{1}{2\delta^2}\norm{\A\x-\ydelta}^2\right)\diff\x\\
    &= \int_\X \prior(\x)\, \exp\!\left({-\frac{1}{2\delta^2}\norm{\A\x}^2+\frac{1}{\delta^2}\langle\A\x,\ydelta\rangle}\right)\diff\x\ \frac{1}{(\delta\sqrt{2\pi})^n}\exp\!\left(-\frac{1}{2\delta^2}\norm{\ydelta}^2\right).
\end{align*}
We define
\begin{align*}
    \lambda(\ydelta) := \log\frac{\py(\ydelta)}{\pnoise(\ydelta)} = \log\left\{ \int_\X \prior(\x)\, \exp\!\left({-\frac{1}{2\delta^2}\norm{\A\x}^2+\frac{1}{\delta^2}\langle\A\x,\ydelta\rangle}\right)\diff\x\right\}
\end{align*}
and can compute its gradient in two ways. On the one hand, we have
\begin{align*}
    \nabla_\y\lambda(\ydelta) &= \frac{\int_\X \prior(\x)\, \frac{1}{\delta^2}\A\x\exp\!\left({-\frac{1}{2\delta^2}\norm{\A\x}^2+\frac{1}{\delta^2}\langle\A\x,\ydelta\rangle}\right)\diff\x}{\int_\X \prior(\x)\, \exp\!\left({-\frac{1}{2\delta^2}\norm{\A\x}^2+\frac{1}{\delta^2}\langle\A\x,\ydelta\rangle}\right)\diff\x} \\
    &= \frac{1}{\delta^2}\A \frac{\int_\X \prior(\x)\, \x\exp\!\left({-\frac{1}{2\delta^2}\norm{\A\x}^2+\frac{1}{\delta^2}\langle\A\x,\ydelta\rangle}\right)\diff\x}{\int_\X \prior(\x)\, \exp\!\left({-\frac{1}{2\delta^2}\norm{\A\x}^2+\frac{1}{\delta^2}\langle\A\x,\ydelta\rangle}\right)\diff\x} \\
    &=\frac{1}{\delta^2}\A\xpm,
\end{align*}
where we can exchange integration and differentiation due to the dominated convergence theorem.
On the other hand, it holds
\begin{align*}
    \nabla_\y\lambda(\ydelta) &=  \nabla_\y(\log\py)(\ydelta) - \nabla_\y \left(-\frac{1}{2\delta^2}\norm{\ydelta}^2\right)\\
    &= \nabla_\y(\log\py)(\ydelta) + \frac{1}{\delta^2} \ydelta.
\end{align*}
All in all, we derive $\A\xpm=\ydelta + \delta^2\nabla_\y(\log\py)(\ydelta)$.
\end{proof}

This establishes a clear connection and difference between the two popular point estimators: Instead of regularizing the forward operator, posterior mean corresponds to an explicit denoising step ahead of the reconstruction. Similar to the MAP estimator, the data distribution comes into play through score functions. As depicted in Figure \ref{fig:denoising} for a denoising setting, this can be thought of as implicit or explicit Euler steps of width $\delta^2$ on the scores of $\prior$ respectively $\py$: The MAP estimator moves points along the score vector field of the prior, such that the direction and length of the step match the value of the score at the reconstructed $\xmap$. In contrast, the posterior mean performs a step along the score of the noisy (and therefore smoothed) data distribution, evaluated at the data point $\zdelta$.

\begin{figure}
\centering
\begin{subfigure}[b]{0.45\textwidth}
  \centering
  \def\svgwidth{\linewidth}
\begingroup%
  \makeatletter%
  \providecommand\color[2][]{%
    \errmessage{(Inkscape) Color is used for the text in Inkscape, but the package 'color.sty' is not loaded}%
    \renewcommand\color[2][]{}%
  }%
  \providecommand\transparent[1]{%
    \errmessage{(Inkscape) Transparency is used (non-zero) for the text in Inkscape, but the package 'transparent.sty' is not loaded}%
    \renewcommand\transparent[1]{}%
  }%
  \providecommand\rotatebox[2]{#2}%
  \newcommand*\fsize{\dimexpr\f@size pt\relax}%
  \newcommand*\lineheight[1]{\fontsize{\fsize}{#1\fsize}\selectfont}%
  \ifx\svgwidth\undefined%
    \setlength{\unitlength}{380bp}%
    \ifx\svgscale\undefined%
      \relax%
    \else%
      \setlength{\unitlength}{\unitlength * \real{\svgscale}}%
    \fi%
  \else%
    \setlength{\unitlength}{\svgwidth}%
  \fi%
  \global\let\svgwidth\undefined%
  \global\let\svgscale\undefined%
  \makeatother%
  \begin{picture}(1,1)%
    \lineheight{1}%
    \setlength\tabcolsep{0pt}%
    \put(0.53330191,0.05052546){\color[rgb]{0.29803922,0.44705882,0.69019608}\makebox(0,0)[lt]{\smash{\begin{tabular}[t]{l}$\delta^2\nabla_\x\log\prior(\xmap)$\end{tabular}}}}%
    \put(0,0){\includegraphics[width=\unitlength,page=1]{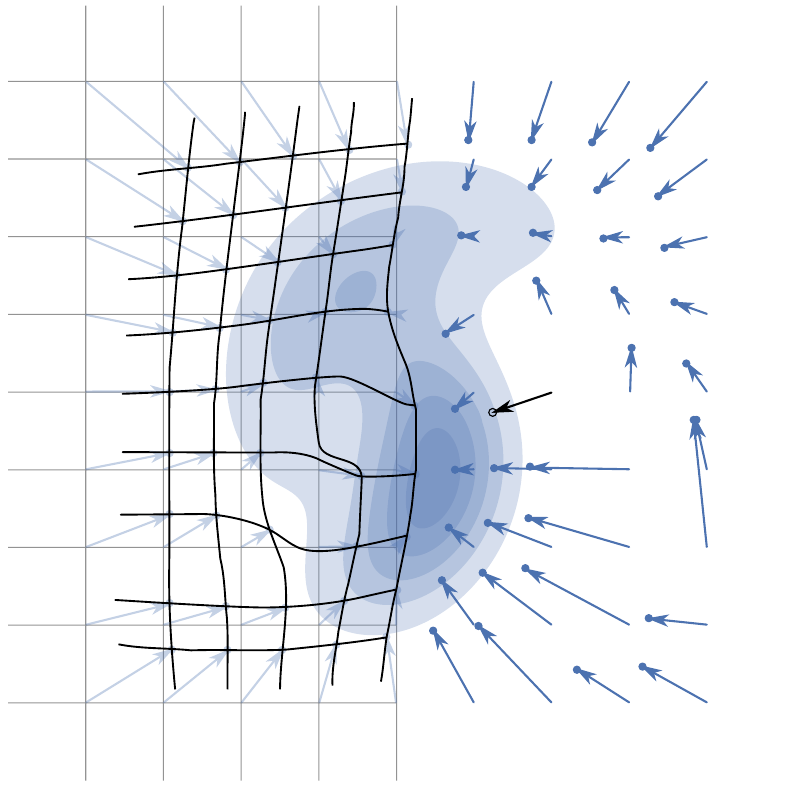}}%
    \put(0.71268405,0.51627726){\color[rgb]{0,0,0}\makebox(0,0)[lt]{\smash{\begin{tabular}[t]{l}$\ydelta$\end{tabular}}}}%
    \put(0.54788395,0.43660582){\color[rgb]{0,0,0}\makebox(0,0)[lt]{\smash{\begin{tabular}[t]{l}$\xmap$\end{tabular}}}}%
    \put(0,0){\includegraphics[width=\unitlength,page=2]{map_out.pdf}}%
  \end{picture}%
\endgroup%

  \caption{MAP estimation (MAP): visualization as an implicit Euler step on the score of the prior distribution $\prior$.}
  \label{fig:denoising_map}
  \end{subfigure}
  \quad
  \begin{subfigure}[b]{0.45\textwidth}
  \centering
      \def\svgwidth{\linewidth}
\begingroup%
  \makeatletter%
  \providecommand\color[2][]{%
    \errmessage{(Inkscape) Color is used for the text in Inkscape, but the package 'color.sty' is not loaded}%
    \renewcommand\color[2][]{}%
  }%
  \providecommand\transparent[1]{%
    \errmessage{(Inkscape) Transparency is used (non-zero) for the text in Inkscape, but the package 'transparent.sty' is not loaded}%
    \renewcommand\transparent[1]{}%
  }%
  \providecommand\rotatebox[2]{#2}%
  \newcommand*\fsize{\dimexpr\f@size pt\relax}%
  \newcommand*\lineheight[1]{\fontsize{\fsize}{#1\fsize}\selectfont}%
  \ifx\svgwidth\undefined%
    \setlength{\unitlength}{380bp}%
    \ifx\svgscale\undefined%
      \relax%
    \else%
      \setlength{\unitlength}{\unitlength * \real{\svgscale}}%
    \fi%
  \else%
    \setlength{\unitlength}{\svgwidth}%
  \fi%
  \global\let\svgwidth\undefined%
  \global\let\svgscale\undefined%
  \makeatother%
  \begin{picture}(1,1)%
    \lineheight{1}%
    \setlength\tabcolsep{0pt}%
    \put(0,0){\includegraphics[width=\unitlength,page=1]{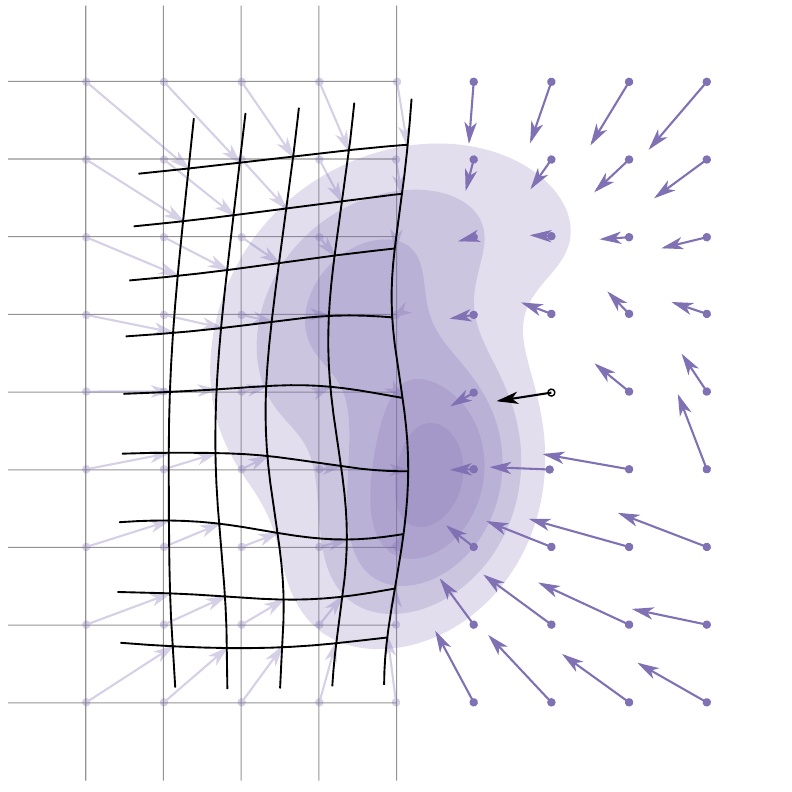}}%
    \put(0.7134295,0.52095349){\color[rgb]{0,0,0}\makebox(0,0)[lt]{\smash{\begin{tabular}[t]{l}$\ydelta$\end{tabular}}}}%
    \put(0.53330195,0.05052546){\color[rgb]{0.50196078,0.43921569,0.70588235}\makebox(0,0)[lt]{\smash{\begin{tabular}[t]{l}$\delta^2\nabla_\y\log\py(\ydelta)$\end{tabular}}}}%
    \put(0.54120339,0.45472886){\color[rgb]{0,0,0}\makebox(0,0)[lt]{\smash{\begin{tabular}[t]{l}$\xpm$\end{tabular}}}}%
    \put(0,0){\includegraphics[width=\unitlength,page=2]{post_mean_out.pdf}}%
  \end{picture}%
\endgroup%

  \caption{Posterior mean estimation: a forward Euler step on the score of the noised distribution $\py$.}
  \label{fig:denoising_pm}
  \end{subfigure}
  \caption{Comparison of MAP and PM for a Gaussian denoising task (i.e. $A=\Id$). The grid visualizes how the data space is deformed by the reconstruction methods. The arrows depict the values of the gradient field at the connected dots – for implicit Euler, we therefore plot the gradients at the reconstructed points.}
  \label{fig:denoising}
\end{figure}

\subsection{Reconstruction by Invertible Neural Networks}
In previous work \cite{Arndt_2023,Arndt_2024}, different training strategies to optimize iResNets, a subclass of invertible neural networks, have been studied theoretically and numerically. From a Bayesian perspective, the authors studied two approaches: approximation training and reconstruction training.

It has been shown that the optimal solution of approximation training
\begin{align}\label{eq:approx_loss}
    \modelapprox = \arg\min_{\model} \expect_{(\x,\zdelta)\sim p(\x,\zdelta)}\left(\frac{1}{2}\norm{\model(\x)-\zdelta}^2\right)
\end{align}
is, in the unconstrained case, $\modelapprox=\AadjA$, and if the set of possible $\modelapprox$ is constrained to iResNets, the learned regularization scheme does at most depend on the first (and second) moments of the prior $\prior$, while there is no influence of the noise which is present in the training data. Note that these results also extend to the approximation of $\x\mapsto\ydelta$, where the minimizer equals $\A$ itself. Motivated by these limitations, we aim to study regularization terms in approximation training. Can these improve the data-dependency?

One popular approach towards data dependence that has also been studied for invertible networks in \cite{Arndt_2024} is to approximate the reconstruction mapping directly: For $\invmodel=(\model)\inv$, reconstruction training
\begin{align*}
    \modelreco = \arg\min_{\model} \expect_{(\x,\zdelta)\sim p(\x,\zdelta)}\left(\frac{1}{2}\norm{\x-\invmodel(\zdelta)}^2\right)
\end{align*}
approximates the posterior mean, if the PM estimator is invertible and the network architecture is sufficiently expressive. In this case, a data-dependent structure is being learned, with the degree of regularization steered by the noise level in the dataset. However, this approach loses explicit control of the forward operator approximation. Once more, this also holds for the original problem $\x\mapsto\ydelta$, where we have shown the equivalence of the posterior mean estimators in Lemma \ref{thm:pm}.
Our goal is to bridge the gap between these two extremes by augmenting approximation training with penalties that introduce data-dependency – retaining explicit forward‑map fidelity while also enabling controllable regularization.

\section{Related Work}
This work builds upon the two previously mentioned publications by Arndt et al. \cite{Arndt_2023, Arndt_2024}. While we point to the existing literature – for example, on existing invertible network architectures – where it is relevant, the aim of this section is to provide a concise introduction into three aspects: the application of invertible neural networks to inverse problems, regularization approaches in neural network training, and the analysis of Bayesian point estimators.

Invertible networks often appear in the context of generative modeling by Normalizing Flows \cite{Dinh_2015}, which can be applied to inverse problems, for example, to learn the prior distribution \cite{Park_2024, Whang_2021} or directly as Conditional Normalizing Flows to sample from the posterior \cite{Govinda_2021, Denker_2021, Kaltenbach_2023}. The latter can also be formulated by incorporating the conditioning into the normalizing distribution \cite{Izmailov_2020}, or by adding latent dimensions in the input and output spaces of an invertible network, which requires bi-directional training \cite{Ardizzone_2019}. Beyond statistical estimates, invertible networks have also been shown to yield competitive, interpretable and stable point reconstructions \cite{Arndt_2025}, where the interpretability arises from the simultaneous approximation of the forward and inverse problems. Our work extends on this line of research by introducing new loss functions that achieve small errors for both directions: forward operator approximation as well as reconstruction.

Regularization of neural networks can be pursued with a variety of goals and methods. Note that in our context, there is some ambiguity in the term \emph{regularization}: while it always refers to stabilizing the solution of underdetermined problems, the specific problem in focus differs. In deep learning, regularization typically aims to improve the generalization of the learned network to unseen data or new tasks \cite{Goodfellow_2016}. In contrast, regularization in inverse problems refers to incorporating prior knowledge to produce stable and meaningful reconstructions \cite{Benning_Burger_2018}. We operate at the intersection of both perspectives, using regularization terms during training to influence the learned reconstruction (regularization) operator for the inverse problem. In deep learning, the methods include architectural constraints, adapted loss functions, and data preprocessing \cite{Goodfellow_2016}. While popular loss terms often constrain the norm of the network weights directly, there are also methods that act on the Jacobian of the network \cite{Behrmann_2021, Jakubovitz_2018}. The log-determinant of the Jacobian appears naturally in Normalizing Flows \cite{Dinh_2015}, while the trace of the Jacobian (the divergence of the vector field defined by the network) is central in implicit score matching for Diffusion Models \cite{hyvarinen_2005} and the related continuous  Normalizing Flows  \cite{Chen_2018}. Although our work focuses on point estimators for inverse problems, the study of these approaches as regularization terms also contributes to their understanding in the broader context of Normalizing Flows and Diffusion Models.

A rigorous and extensive introduction to Bayesian inverse problems is provided by Arridge et al. \cite{Arridge_Maass_Öktem_Schönlieb_2019}, where various reconstruction methods are derived as different posterior point estimators. Differences between estimators, such as MAP and posterior mean, have also been discussed by Calvetti et al. \cite{Calvetti_2007}.

\section{Theoretical Results}
As motivated initially, our goal is to design training strategies that (i) compel an invertible network $\model$ to approximate the forward operator and (ii) simultaneously impose data‐dependent regularization on the reconstruction via $\invmodel$. A promising approach towards bi-directional mappings is to train Conditional Normalizing Flows (CNFs) that include the conditioning as a parameter in the base distribution, such as in FlowGMM \cite{Izmailov_2020}. In our inverse-problem setting and if $\Y=\X$ (as we will assume for Section \ref{sec:logdet}), one might use a conditioned base distribution $\hat{p}(\cdot|\ydelta):=\mathcal{N}(\ydelta,\deltachosen^2)$ for some \textit{chosen} parameter $\deltachosen$ and train the network $\model$ by maximum likelihood such that for $\x_i\sim\prior$, $\model(\x_i)$ follows the corresponding base distribution $\hat{p}(\cdot|\ydelta_i)$, approximating the forward mapping $\x\mapsto\ydelta$.
Ideally, the learned inverse mapping could then serve to sample from the posterior by first adding Gaussian noise to a given $\ydelta$. 

However, in practice, the added noise has only a local effect – nearby $\ydelta$ values remain strongly coupled – so the CNF cannot capture the full diversity of the true posterior $p(\x|\ydelta)$. Thus, exact posterior sampling fails. Nevertheless, we expect that this approach imposes some useful data dependency on the learned model. This leads us to ask:
\emph{Which point estimator does this CNF training actually produce?}

To do this, we change our perspective on this approach and reinterpret it as an augmentation of the approximation loss (\ref{eq:approx_loss}) via a regularization term. We prove and discuss the properties of the resulting reconstructor and then proceed to study another variant of this approach, proving connections to known point estimators.

\subsection{Regularization by Log-Jacobian-Determinant}\label{sec:logdet}
The maximum-likelihood optimization problem for the previously described flow-based approach reads
\begin{align}\label{eq:logdet_loss}
    \modelogdet = \arg\min_{\model} \expect_{(\x,\ydelta)\sim p(\x,\ydelta)}\left(\frac{1}{2}\norm{\model(\x)-\ydelta}^2 - \deltachosen^2\, \log|\det\Diff\model(\x)|\right).
\end{align}
Although motivated by CNFs, this formulation can be interpreted more generally: as a standard forward-approximation loss augmented by a volume-expansion penalty.  Here, the chosen $\deltachosen^2$ plays the role of a regularization weight: the network is driven not only to fit the operator $\x \mapsto \ydelta$ but also to \emph{locally expand} high-density regions, thereby counteracting the contraction imposed by an ill-posed operator $\A$.  As a result, the learned inverse $\invmodelogdet$ naturally pushes reconstructions toward high-density areas of the prior, yielding a built-in regularizing effect. A precise characterization of this effect follows from the next theorem.

\begin{theorem}\label{thm:logdet}
    The minimizer of the log-determinant-regularized loss in (\ref{eq:logdet_loss}) satisfies
    \begin{align*}
        \modelogdet(\x) = \A \x - \deltachosen^2\,\nabla_\y(\log \priorpushed)(\modelogdet(\x)),
    \end{align*}
    where $\priorpushed$ denotes the density of the push-forward of the prior $\prior$ through $\modelogdet$.
\end{theorem}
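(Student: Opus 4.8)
The plan is to obtain the stated identity as the first-order optimality (Euler--Lagrange) condition for the variational problem (\ref{eq:logdet_loss}) over $\model \in \Diffeo^2(\X)$. Since here $\A:\X\to\X$ and $\ydelta = \A\x + \eta$ with $\eta$ independent of $\x$ and $\expect\,\eta = 0$, expanding the square gives $\expect_{(\x,\ydelta)}\frac12\norm{\model(\x)-\ydelta}^2 = \expect_{\x\sim\prior}\frac12\norm{\model(\x)-\A\x}^2 + \frac12\expect\norm{\eta}^2$, and the noise term is a constant independent of $\model$. Hence it suffices to analyze the minimizer of $L[\model] := \expect_{\x\sim\prior}\bigl(\frac12\norm{\model(\x)-\A\x}^2 - \deltachosen^2\log|\det\Diff\model(\x)|\bigr)$; note in passing that $\expect_{\x\sim\prior}\log|\det\Diff\model(\x)| = H(\priorpushed) - H(\prior)$ is the differential entropy of the push-forward up to an additive constant, which makes precise the volume-expansion heuristic described above.

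Next I would compute the first variation. For a perturbation $V \in C^2(\X,\X)$ with sufficient decay (e.g.\ compactly supported), set $\model_t := \model + tV$, which stays in $\Diffeo^2(\X)$ for $|t|$ small. The quadratic term contributes $\frac{d}{dt}\big|_{t=0}\expect_{\x\sim\prior}\frac12\norm{\model_t(\x)-\A\x}^2 = \expect_{\x\sim\prior}\langle\model(\x)-\A\x, V(\x)\rangle$, while Jacobi's formula yields $\frac{d}{dt}\big|_{t=0}\log|\det\Diff\model_t(\x)| = \tr\!\bigl((\Diff\model(\x))^{-1}\Diff V(\x)\bigr)$. Stationarity of $L$ therefore reads
\[
  \expect_{\x\sim\prior}\langle\model(\x)-\A\x, V(\x)\rangle - \deltachosen^2\,\expect_{\x\sim\prior}\tr\!\bigl((\Diff\model(\x))^{-1}\Diff V(\x)\bigr) = 0
\]
for all admissible $V$. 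The key step is to rewrite the trace term in the image coordinates $\y = \model(\x)$: putting $W := V\circ\invmodel$, the chain rule gives $\Diff_\x V(\x) = \Diff_\y W(\y)\,\Diff\model(\x)$, so by cyclicity of the trace $\tr\!\bigl((\Diff\model(\x))^{-1}\Diff_\x V(\x)\bigr) = \tr\Diff_\y W(\y) = (\nabla\cdot W)(\y)$, and the change of variables turns $\prior(\x)\,\diff\x$ into $\priorpushed(\y)\,\diff\y$ by the very definition of the push-forward density. Integrating by parts in $\y$ (boundary terms vanish under the decay assumptions on $\prior$, hence on $\priorpushed$) and using $\nabla\priorpushed = \priorpushed\,\nabla\log\priorpushed$ gives $\int\priorpushed(\y)\,(\nabla\cdot W)(\y)\,\diff\y = -\int\priorpushed(\y)\,\langle(\nabla\log\priorpushed)(\y), W(\y)\rangle\,\diff\y$; changing variables back to $\x$-space,
\[
  \expect_{\x\sim\prior}\tr\!\bigl((\Diff\model(\x))^{-1}\Diff V(\x)\bigr) = -\expect_{\x\sim\prior}\langle(\nabla_\y\log\priorpushed)(\model(\x)), V(\x)\rangle .
\]

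Substituting this back, the stationarity condition becomes $\expect_{\x\sim\prior}\langle\model(\x)-\A\x + \deltachosen^2(\nabla_\y\log\priorpushed)(\model(\x)), V(\x)\rangle = 0$ for every admissible $V$; since $\prior > 0$ and the bracketed vector field is continuous, the fundamental lemma of the calculus of variations forces it to vanish identically, which is exactly $\modelogdet(\x) = \A\x - \deltachosen^2(\nabla_\y\log\priorpushed)(\modelogdet(\x))$. I expect the main obstacle to be the technical bookkeeping around admissibility rather than the computation: one must verify that $\model + tV$ remains a genuine $C^2$-diffeomorphism (global bijectivity, not merely local invertibility) for small $t$, that all integrals and their $t$-derivatives are finite and may be interchanged — which is where the assumed fast decay of $\prior$ enters — and that the admissible class of $V$ is rich enough (e.g.\ all compactly supported $C^2$ fields) to conclude pointwise vanishing. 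A secondary point worth stating explicitly is that the theorem provides only a necessary condition; convexity of $L$, and hence sufficiency, is not claimed.
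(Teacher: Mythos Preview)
Your proposal is correct and follows the same overall strategy as the paper: eliminate the noise via independence and zero mean, then derive the Euler--Lagrange equation for $L[\model]$ by computing the first variation with Jacobi's formula, integrating by parts, and invoking the fundamental lemma. Your discussion of the technical side conditions (that $\model+tV$ remains in $\Diffeo^2(\X)$, dominated convergence, compactly supported test fields, and that only necessity is claimed) matches what the paper actually uses.

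The one substantive difference is in how the trace term is processed. The paper stays in $\x$-coordinates: it rewrites $\tr\bigl((\Diff\model)^{-1}\Diff h\bigr)$ via the product rule for the divergence of $(\Diff\model)^{-1}h$, integrates by parts against $\prior$ in $\x$, applies the fundamental lemma, and only then converts the resulting expression to the push-forward form through a chain of inverse-function-rule identities and logarithmic-derivative manipulations. You instead change variables to $\y=\model(\x)$ immediately, setting $W:=V\circ\invmodel$ so that cyclicity of the trace gives $\tr\bigl((\Diff\model)^{-1}\Diff V\bigr)=\nabla_\y\cdot W$ and the measure becomes $\priorpushed(\y)\,\diff\y$ by definition; a single integration by parts in $\y$ then produces $\nabla_\y\log\priorpushed$ directly. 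Both arguments are the same proof at heart, but your organization makes the appearance of the push-forward density transparent and avoids the longer string of rewritings; the paper's version, on the other hand, keeps all manipulations in the original coordinate system until the very end, which some readers may find easier to verify line by line.
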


\begin{proof}
    We start by noting, similar to \cite{Arndt_2024}, that the target is not influenced by the noise:
    \begin{align*}
    &\ \expect_{(\x,\ydelta)\sim p(\x,\ydelta)}\!\left(\frac{1}{2}\norm{\ydelta - \model(\x)}^2 - \deltachosen^2\,\log|\det\Diff\model(\x)|\right) \\
    =&\ \expect_{\x\sim\prior, \eta\sim\pnoise}\!\left(\frac{1}{2}\norm{\A\x+\eta - \model(\x)}^2 - \deltachosen^2\,\log|\det\Diff\model(\x)| \right)\\
    =&\ \expect_{\x\sim\prior, \eta\sim\pnoise}\!\left(\frac{1}{2}\norm{\A\x\! -\! \model(\x)}^2\! -\langle\eta,\A\x-\varphi(\x)\rangle +  \frac{1}{2}\norm{\eta}^2\!\!- \deltachosen^2\log|\det\Diff\model(\x)| \right)\\
    =&\ \expect_{\x\sim \prior}\!\left(\frac{1}{2}\norm{\A\x - \model(\x)}^2 - \deltachosen^2\,\log|\det\Diff\model(\x)| \right) + C_1
    \end{align*}
    since $\langle\eta,\A\x-\varphi(\x)\rangle$ has zero expectation (due to the independence of $\x$ and $\eta$) and $C_1:=\expect_{\eta\sim\pnoise}\frac{1}{2}\norm{\eta}^2$ is constant and independent of the optimization.
    
    To show the property of the minimizer $\model$, we use first-order optimality: Given a test function $h\in C^\infty_c(\X,\X)$, since $\Diffeo^2(\supp(h))$ is open in $C^2(\supp(h), \supp(h))$, it exists $\varepsilon>0$ small enough such that $\model+\varepsilon h\in\Diffeo^2(\X)$.
    The minimizer has to satisfy
\begin{align*}
    \frac{\diff}{\diff \varepsilon}\left( \expect_{\x\sim\prior} \left(\frac{1}{2}\norm{\model(\x)+\varepsilon h(\x)-\A\x}^2 - \deltachosen^2\log\left(\left|\det\Diff\model(\x)+\varepsilon\Diff h(\x)\right|\right) \right) \right)_{\varepsilon=0}&=0,
\end{align*}
where the dominated convergence theorem together with the compact support of $h$ allow to pull the differentiation into the integral:
\begin{align*}
    \expect_{\x\sim\prior} \left(\langle\model(\x)-\A\x, h(\x)\rangle - \deltachosen^2\tr\left(\Diff\model(\x)^{-1}\Diff h(\x) \right)\right)&=0\\
    \Leftrightarrow \int_\X \prior(\x) \left(\langle\model(\x)-\A\x, h(\x)\rangle - \deltachosen^2\tr\left(\Diff\model(\x)^{-1}\Diff h(\x) \right)\right) \,\diff\x&=0.
\end{align*}
To decompose the last term, one can leverage the product rule for the divergence reversely. This leads to
\begin{align*}
    \tr\left(\Diff\model(\x)^{-1}\Diff h(\x) \right) &= \nabla\cdot(\Diff\model(\x)^{-1} h(\x)) - \langle \nabla\cdot(\Diff\model(\x)^{-1}), h(\x)\rangle.
\end{align*}

On the resulting integral, we can apply integration by parts, where the boundary terms vanish since $h$ has compact support:
\begin{align*}
    \int_\X \prior(\x) \Bigl(\langle\model(\x)-\A\x, h(\x)\rangle & - \deltachosen^2(\nabla\cdot(\Diff\model(\x)^{-1} h(\x)) - \langle \nabla\cdot(\Diff\model(\x)^{-1}), h(\x)\rangle) \Bigr) \,\diff\x \\
    \begin{split}
        =\int_\X \prior(\x)\langle\model(\x)-\A\x, h(\x)\rangle &+ \deltachosen^2\langle\Diff\model(\x)^{-1} h(\x), \nabla \prior(\x)\rangle\\
    &+\deltachosen^2\langle \prior(\x)\nabla\cdot(\Diff\model(\x)^{-1}), h(\x)\rangle  \,\diff\x.
    \end{split}
\end{align*}
Now, by the fundamental lemma of the calculus of variations, we can omit $h$ and the integral and write
\begin{align*}
    (\model(\x)-\A\x) + \deltachosen^2 \Diff\model(\x)^{-T} \frac{\nabla \prior(\x) }{\prior(\x)}+ \deltachosen^2\nabla_\x\cdot(\Diff\model(\x)^{-1})&= 0.
\end{align*}
To proceed, we apply the inverse function rule. We have:
\begin{align*}
    \Diff \model(\x)\inv = \Diff \invmodel(\model(\x)) \quad \text{and} \quad \Diff \invmodel(\y)\inv = \Diff \model(\invmodel(\y)).
\end{align*}
We further define $\y:=\model(\x)$ and can conclude
\begin{align*}
    (\model(\x)-\A\x) + \deltachosen^2 \Diff \invmodel(\model(\x))^{T} \frac{\nabla \prior(\x) }{\prior(\x)}+ \deltachosen^2\nabla_\x\cdot(\Diff \invmodel(\model(\x)))&= 0\\
    \Leftrightarrow (\model(\x)-\A\x) + \deltachosen^2 \Diff \invmodel(\model(\x))^{T} \frac{\nabla \prior(\x) }{\prior(\x)}+ \deltachosen^2\Diff\model(\x)^T (\nabla_\y\Diff \invmodel)(\model(\x))&= 0 \\
    \Leftrightarrow (\y-\A\invmodel(\y)) + \deltachosen^2 \Diff \invmodel(\y)^{T} \frac{\nabla \prior(\invmodel(\y)) }{\prior(\invmodel(\y))} + \deltachosen^2 \Diff\invmodel(\y)^{-T}\nabla_\y\Diff\invmodel(\y)&= 0\\
    \Leftrightarrow (\y-\A\invmodel(\y)) + \deltachosen^2 \nabla_\y\log\prior(\invmodel(\y)) + \deltachosen^2 \nabla_z\log|\det\Diff\invmodel(\y)|&= 0\\
    \Leftrightarrow (\y-\A\invmodel(\y)) + \deltachosen^2 \nabla_\y\left(\log(\prior(\invmodel(\y))|\det\Diff\invmodel(\y)|)\right)&= 0\\
    \Leftrightarrow (\y-\A\invmodel(\y)) + \deltachosen^2 \nabla_\y(\log\model_{\#}\prior)(\y) &= 0.
\end{align*}
Rephrasing this in terms of the learned forward model $\modelogdet$ yields the desired result.
\end{proof}

First, we notice the clear data dependence induced by this approach: The volume-expansion term injects prior information into the forward model approximation, visible through the appearance of $\prior$. Second, we emphasize that the resulting network is independent of the noise level $\delta$ in the training data; instead, the degree of regularization depends solely on the prior and the chosen $\deltachosen$. 
If we denote the reconstruction of $\ydelta$ by $\xlogdet$, we can read this formula as
\begin{align*}
    \A\,\xlogdet = \ydelta + \deltachosen^2\,\nabla_\y(\log\priorpushed)(\ydelta).
\end{align*}
Notice that this generally resembles well-known ingredients: Similar to the condition for posterior mean estimation, the reconstruction can be thought of as a denoising before the backprojection of the data. The denoising is once more given by a single forward Euler step on a score vector field. In this case, however, this density is only implicitly defined as the push-forward of the prior through the learned forward model. We further analyze the reconstruction method in the following remark.
\begin{remark}
Our goal is to study the relation between the reconstruction method induced by the log-determinant regularized model and the posterior mean:
\begin{alignat*}{3}
    \A\,&\xlogdet &= &\, \ydelta+\deltachosen^2\nabla_\y (\log\priorpushed)(\ydelta) \\
    \A\,&\xpm &= &\, \ydelta + \delta^2\,\nabla_\y(\log\py)(\ydelta).
\end{alignat*}
To do this, we analyze $\py$ and compare it to $\priorpushed$: We have that $\py=(\A_\#\prior)\ast\pnoise$. Since the convolution with a Gaussian corresponds to the solution of a diffusion process $\partial_t p_t=\frac{1}{2}\Delta p_t$, $p_0=\A_\#\prior$ at time point $\delta^2$, we can consider the corresponding probability flow. According to the Liouville equation, the trajectories of
\begin{align*}
    \partial_t y_t &= -\frac{1}{2}\nabla \log p_t(y_t)\\
    y_0 &= \A\x
\end{align*}
share the same time evolution of distributions $p_t$. We denote the flow map for this ODE by \smash{$F_t:\Y\to\Y$}, it fulfills
\begin{align*}
    \partial_t F_t(\A\x) = -\frac{1}{2}\nabla (\log F_{t\#}\A_\#\prior)(F_t(\A\x)).
\end{align*}
The implicit Euler for time step $\Delta t$ reads
\begin{align*}
   \hat{F}_{\Delta t}(\A\x) &= \A\x -\Delta t\,\frac{1}{2}\nabla (\log \hat{F}_{{\Delta t}\#}\A_\#\prior)(F_{\Delta t}(\A\x)).
\end{align*}
If we set $\model:= \hat{F}_{\Delta t}\circ\A$ and $\Delta t:=2\deltachosen^2$, this corresponds to the equation
\begin{align*}
    \A\x &= \model(\x) + \deltachosen \,  \nabla(\log \model_{\!\!\!\#}\prior)(\model(\x)),
\end{align*}
which is exactly the above condition on the minimizer $\modelogdet$. Thus, we have constructed a model that fulfills this equation.
With $\deltachosen=\delta$, the score step on $\priorpushed$ therefore has similar properties to $\py$, but acts on a distribution that is evolved with an implicit Euler step (instead of a time continuous Gaussian process) and a doubled step size, leading to a stronger smoothing of the score field.

\end{remark}

As a result, we expect the reconstruction with $\invmodelogdet$ to resemble a smoothed-out version of the posterior mean. In contrast to reconstruction training, the amount of regularization is steered by the choice of $\deltachosen$ and does not directly depend on the data noise. This should solve one of our designated tasks: it provides a method that can explicitly balance forward operator approximation and data-based reconstruction accuracy. We nevertheless recall that this derivation was limited to the case of $\Y=\X$.
As stated initially, we additionally search for methods that allow for high-fidelity reconstructions, and aim to modify this approach to learn the MAP estimate.

\subsection{Regularization by Divergence}
Building upon the proof of Theorem \ref{thm:logdet}, we tackle this challenge by a second approach. Here, we employ the more general normal equation (\ref{problem:normal}) problem and a linearization of the log-determinant: the vector-field divergence of the model $\nabla\cdot\model(\x)$. The resulting minimization problem therefore reads
\begin{align}\label{eq:div_loss}
    \modeldiv = \arg\min_{\model} \expect_{(\x,\zdelta)\sim p(\x,\zdelta)}\left(\frac{1}{2}\norm{\model(\x)-\zdelta}^2 - \deltachosen^2\, \nabla\cdot\model(\x)\right).
\end{align}

The intuition behind the divergence remains similar to the previous approach: The regularization objective can be equivalently seen as maximizing the divergence of $\model-\Id$, which describes how points move in the learned (regularized) forward mapping. Roughly speaking, the vector-field divergence of the vector field determines how much the mapping is expanding or shrinking at each point. While the approximation of an ill-posed $\A$ forces the model to contract the space, maximizing the divergence for regions with high likelihood means that these regions are less contracted.

The following theorem demonstrates that this leads to the approximation of a data-based regularization of $\AadjA$ -- we extract the forward operator that appears in the regularized normal equation from first-order optimality for MAP estimation (\ref{eq:map}). Once more, $\deltachosen$ acts as a regularization parameter – in line with the MAP estimate, we will usually choose it as the noise level for which we aim to reconstruct, i.e. $\deltachosen=\delta$.

\begin{theorem}\label{thm:div} The divergence-regularized loss in (\ref{eq:div_loss}) is equivalent to 
\begin{align*}
    \expect_{\x\sim\prior} \left(\frac{1}{2}\norm{\model(\x) -\left(\AadjA\x - \deltachosen^2 \nabla_\x (\log\prior)(x)\right)}^2\right).
\end{align*}
Therefore, the optimal solution for unconstrained $\model$ is $\modeldiv=\AadjA - \deltachosen^2 \nabla_\x (\log\prior)$.
\end{theorem}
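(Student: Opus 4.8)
The plan is to mirror the strategy from the proof of Theorem~\ref{thm:logdet}: first strip the training noise from the objective, then convert the divergence penalty into a linear term by integration by parts, complete the square, and finally read off the unconstrained minimizer by pointwise minimization.

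First I would remove the noise. Writing $\zdelta = \AadjA\x + \Aadj\eta$ and expanding $\frac12\norm{\model(\x)-\zdelta}^2$, the cross term $-\langle\Aadj\eta,\,\AadjA\x-\model(\x)\rangle$ has zero expectation by independence of $\x$ and $\eta$ together with $\expect\eta=0$, while $\frac12\norm{\Aadj\eta}^2$ contributes a constant $C_1$ independent of the optimization. Hence the loss~(\ref{eq:div_loss}) equals, up to $C_1$,
\[
\expect_{\x\sim\prior}\left(\frac12\norm{\model(\x)-\AadjA\x}^2 - \deltachosen^2\,\nabla\cdot\model(\x)\right).
\]
Next, since $\nabla\cdot\model(\x)=\tr(\Diff\model(\x))$, component-wise integration by parts gives $\int_\X\prior(\x)\,\nabla\cdot\model(\x)\,\diff\x = -\int_\X\langle\nabla\prior(\x),\model(\x)\rangle\,\diff\x = -\expect_{\x\sim\prior}\langle\nabla_\x(\log\prior)(\x),\,\model(\x)\rangle$, where the boundary terms vanish because $\prior$ decays sufficiently fast (our standing assumption), which also guarantees that this expectation is finite.

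Substituting back and completing the square in $\model(\x)$ — using $\frac12\norm{v-a}^2+\langle b,v\rangle = \frac12\norm{v-(a-b)}^2 + \langle a,b\rangle - \frac12\norm{b}^2$ with $v=\model(\x)$, $a=\AadjA\x$, $b=\deltachosen^2\nabla_\x(\log\prior)(\x)$ — the leftover terms $\deltachosen^2\langle\AadjA\x,\nabla_\x(\log\prior)(\x)\rangle - \tfrac{\deltachosen^4}{2}\norm{\nabla_\x(\log\prior)(\x)}^2$ depend only on $\x$, not on $\model$, so their expectation over $\prior$ is a finite constant $C_2$. This yields
\[
\text{loss} = \expect_{\x\sim\prior}\left(\frac12\norm{\model(\x) - \bigl(\AadjA\x - \deltachosen^2\nabla_\x(\log\prior)(\x)\bigr)}^2\right) + C_1 + C_2,
\]
which is exactly the claimed equivalent objective. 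Since this is a nonnegative integral minimized by pointwise choice, the unconstrained minimizer satisfies $\model(\x)=\AadjA\x - \deltachosen^2\nabla_\x(\log\prior)(\x)$ for $\prior$-almost every $\x$, hence everywhere on $\supp\prior=\X$ by continuity of both sides, giving $\modeldiv = \AadjA - \deltachosen^2\nabla_\x(\log\prior)$.

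The only non-algebraic step is the integration by parts, and the main thing to be careful about there is invoking the assumed fast decay of $\prior$ to discard the boundary contributions and to ensure $\expect_{\x\sim\prior}\langle\nabla_\x(\log\prior)(\x),\model(\x)\rangle$ and $C_2$ are finite; everything else is routine. A secondary point worth noting is that the identified minimizer need not itself lie in $\Diffeo^2(\X)$, so the statement is genuinely about the relaxed problem — it exhibits the target that divergence-regularized approximation training drives $\model$ toward, paralleling the unconstrained-versus-constrained discussion following~(\ref{eq:approx_loss}); interpreting this target via the MAP optimality condition~(\ref{eq:map}) is then what connects it to the MAP estimator.
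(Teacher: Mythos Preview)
Your proof is correct and follows essentially the same approach as the paper's: both hinge on the integration-by-parts identity linking $\expect_{\x\sim\prior}\nabla\cdot\model(\x)$ to $-\expect_{\x\sim\prior}\langle\nabla_\x\log\prior(\x),\model(\x)\rangle$, with the paper simply running the computation in the opposite direction (starting from the target squared loss and expanding to recover the divergence-regularized loss) and absorbing the constant $\tr(\AadjA)$ rather than completing the square.
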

\begin{proof}
    Similar to Theorem \ref{thm:logdet}, it follows that the objective is independent of the noise.
    We proceed by proving the equivalence of this loss function: We have
    \begin{align*}
    &\ \expect_{\x\sim\prior}\! \left(\frac{1}{2}\norm{\AadjA\x-\deltachosen^2 \nabla_\x (\log\prior)(x)-\model(\x)}^2\right)\\
    =&\ \expect_{\x\sim\prior}\! \left(\frac{1}{2}\norm{\AadjA\x-\model(\x)}^2\! + \frac{1}{2}\norm{\deltachosen^2\nabla_\x (\log\prior)(x)}^2\!\! - \langle \AadjA\x-\model(\x), \deltachosen^2 \nabla_\x (\log\prior)(x) \rangle\right)\\
    =&\ \int_\X \prior(\x) \left(\frac{1}{2}\norm{\AadjA\x-\model(\x)}^2 + C - \deltachosen^2\, \left\langle \AadjA\x-\model(\x), \frac{\nabla_\x \prior(x)}{\prior(\x)} \right\rangle\right)\,\diff\x \\
    =& \int_\X \prior(\x) \frac{1}{2}\norm{\AadjA\x-\model(\x)}^2 - \deltachosen^2 \left\langle \AadjA\x-\model(\x), \nabla_\x \prior(x) \right\rangle\,\diff\x + C.
\end{align*}
For the second term, we can employ multi-dimensional partial integration,
where the surface integral vanishes since we assume the expectation $\int_\X\prior(\x)(\AadjA\x-\model(\x))\, \diff\x$ to exist. We can summarize
\begin{align*}
    &\ \int_\X \prior(\x) \frac{1}{2}\norm{\AadjA\x-\model(\x)}^2 - \deltachosen^2 \left\langle \AadjA\x-\model(\x), \nabla_\x \prior(x) \right\rangle\,\diff\x\\
    =& \int_\X \prior(\x) \frac{1}{2}\norm{\AadjA\x-\model(\x)}^2 + \prior(\x)\,\deltachosen^2\, \nabla \cdot (\AadjA\x-\model(\x)) \,\diff\x \\
    =& \int_\X \prior(\x) \frac{1}{2}\norm{\AadjA\x-\model(\x)}^2 - \prior(\x)\, \deltachosen^2\,\nabla \cdot \model(\x) \,\diff\x + \deltachosen^2\,\tr(\AadjA)\\
    =&\ \expect_{\x\sim\prior} \left(\frac{1}{2}\norm{\AadjA\x-\model(\x)}^2 - \deltachosen^2\,\nabla\cdot\model(\x)\right) + C_3,
\end{align*}
which shows the desired equivalence. Note that this proof uses similar techniques as the study of implicit score matching losses \cite{hyvarinen_2005}.
\end{proof}

This unveils once more a type of derivative-based regularization term that leads to a data-dependent regularization for the resulting reconstructor. As before, the regularization degree is controlled by the choice of $\deltachosen$, instead of the data noise level $\delta$ itself. In this case, we are able to not only show a condition for the minimizer but also derive the equivalence of the loss function to a simple expectation of the squared distance. As a result, the regularizing effect comes into play independent of the optimization strategy.

Since the approximated operator exactly resembles the necessary condition for MAP estimation (\ref{eq:map}), we argue that the divergence regularization term generally leads to highly desirable data-based point estimators and later underline this hypothesis by first numerical experiments. However, in many cases, the MAP estimator itself is not invertible and can therefore not be expressed by our proposed methods. A special set of problems, for which the first-order optimality is also a sufficient condition, is log-concave priors.
\begin{corollary}
    Let $\prior$ be a strongly log-concave and differentiable prior density. Then, the MAP estimation problem is strongly convex, and the first-order optimality condition becomes a sufficient condition. In addition, the regularized forward operator is invertible \cite{Arndt_2024}, and therefore the reconstruction method given by the minimizer of (\ref{eq:div_loss}) is equivalent to MAP estimation.
\end{corollary}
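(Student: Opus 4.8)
The plan is to establish the three asserted facts in turn — strong convexity of the MAP functional, global invertibility of the regularized forward operator, and the identification of the $\modeldiv$-reconstruction with $\xmap$ — and then to chain them through Theorem~\ref{thm:div}, using the parameter choice $\deltachosen=\delta$ adopted for the MAP construction. First I would write strong log-concavity as $\prior=e^{-V}$ with $V$ (continuously) differentiable and $\mu$-strongly convex for some $\mu>0$. The MAP objective $J(\x)=\tfrac12\norm{\A\x-\ydelta}^2+\delta^2 V(\x)$ is then the sum of a convex quadratic and the $\delta^2\mu$-strongly convex term $\delta^2 V$, hence $\delta^2\mu$-strongly convex (equivalently $\AadjA+\delta^2\nabla^2 V(\x)\succeq\delta^2\mu\,\Id$ when $V\in C^2$); this yields a unique minimizer and makes the stationarity condition (\ref{eq:map}) necessary \emph{and} sufficient.

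Next I would show that the regularized operator $T:=\AadjA-\delta^2\nabla_\x(\log\prior)=\AadjA+\delta^2\nabla V$ is a $C^2$-diffeomorphism of $\X$. The key point is that $T=\nabla g$ for the $\delta^2\mu$-strongly convex potential $g(\x):=\tfrac12\norm{\A\x}^2+\delta^2 V(\x)$, so $T$ is continuous and strongly monotone on $\R^{n_\X}$ and therefore a homeomorphism of $\R^{n_\X}$ (monotone-operator theory; this is the invertibility fact invoked from \cite{Arndt_2024}). Since $\Diff T(\x)=\AadjA+\delta^2\nabla^2 V(\x)\succ0$ is invertible at every $\x$, the global inverse function theorem upgrades the homeomorphism to $T\in\Diffeo^2(\X)$, provided $\prior$ is smooth enough that $\nabla V$ is $C^2$; otherwise one restricts all statements to $\supp\prior$, on which the loss (\ref{eq:div_loss}) only constrains $\model$ anyway.

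Finally I would assemble the pieces. By Theorem~\ref{thm:div}, the loss (\ref{eq:div_loss}) equals $\expect_{\x\sim\prior}\bigl(\tfrac12\norm{\model(\x)-T(\x)}^2\bigr)$ up to an additive constant, whose pointwise minimizer is $\model=T$; because $T\in\Diffeo^2(\X)$ this minimizer is admissible, so $\modeldiv=T$ and hence $\invmodeldiv=T\inv$. The reconstruction of a datum $\zdelta$ is therefore $\x^\ast=T\inv(\zdelta)$, i.e.\ $(\AadjA-\delta^2\nabla_\x\log\prior)(\x^\ast)=\zdelta$, which is exactly the optimality condition (\ref{eq:map}); by the strong convexity established above, this forces $\x^\ast=\xmap$. (If one prefers to start from the original problem (\ref{problem:orig}) rather than the normal equation, the identity $p(\x|\zdelta)=p(\x|\ydelta)$ from Lemma~\ref{thm:pm} shows the two MAP estimates coincide, so the conclusion is unaffected.)

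I expect the main obstacle to be the middle step: deducing $T\in\Diffeo^2(\X)$ from strong log-concavity. Pointwise invertibility of $\Diff T$ does not by itself give a global bijection, so the strong-monotonicity argument (or the cited result of Arndt et al.) is genuinely required; and obtaining a $C^2$ inverse needs either a smoothness hypothesis on $\prior$ slightly beyond bare differentiability or the restriction-to-$\supp\prior$ convention already used in the paper. The remaining steps are essentially bookkeeping built on Theorem~\ref{thm:div} and the strong convexity of $J$.
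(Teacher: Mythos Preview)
Your proposal is correct and follows exactly the argument the paper intends: the corollary is stated without an explicit proof, relying on Theorem~\ref{thm:div} for the identification $\modeldiv=\AadjA-\deltachosen^2\nabla_\x(\log\prior)$, on standard convexity to make (\ref{eq:map}) sufficient, and on the cited result of \cite{Arndt_2024} for invertibility of the regularized forward operator. Your plan simply spells out these three ingredients in detail, and the regularity caveat you flag (that $\prior\in C^1$ alone does not literally force $T\in\Diffeo^2(\X)$) is a genuine technical point the paper glosses over rather than an error in your reasoning.
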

In the appendix of \cite{Arndt_2024}, the authors show how this limitation can be loosened in the direction of large singular values of $\A$ or small noise levels. Especially for the denoising problem, e.g. in proximal operators, this also enables us to learn MAP estimates for non-convex negative log-likelihoods in the prior. In this case, inverting the network corresponds to solving the equation for an implicit Euler scheme, as it is depicted in Figure \ref{fig:denoising}.

\section{Numerical Experiments}
To underline our findings and study the behavior of the presented regularization terms in practice, we perform first numerical experiments on a two-dimensional toy problem\footnote{\url{https://gitlab.informatik.uni-bremen.de/inn4ip/regularization-terms-for-inn}}.
In our computations, we use iResNets \cite{behrmann_2019}, a class of neural networks that achieve invertibility through monotonicity, enforced by a Lipschitz constraint during training, see Section \ref{sct:iresnets}. The inverse of these networks is given by a fixed point iteration with guaranteed convergence. Nevertheless, our theoretical results are not limited to these and can be extended to other approaches such as architectures based on affine coupling blocks \cite{dinh_2017, kingma_2018}. 

\subsection{Implementation}
\subsubsection{Efficient Computation of Regularization Terms}
For low-dimensional problems, the vector-field divergence as well as the log-determinant of the Jacobian can be easily computed via backpropagation. However, since a separate backward pass for each dimension is necessary, the exact computation of those values is infeasible for high-dimensional settings, e.g. in imaging.

A common way to overcome this in the computation of the divergence is to use Hutchinson's trace estimator on the Jacobian \cite{Hutchinson_1989}. The idea behind this approach is to compute
\begin{align*}
    \tr(\Diff\model(\x))=\expect_{\varepsilon\sim\mathcal{N}(0,1)^{n_\X}} \langle \varepsilon, \Diff\model(\x)\, \varepsilon\rangle
\end{align*}
via a finite set of samples $\varepsilon$ for every sample $\x$. The matrix-vector product $\Diff\model(\x)\, \varepsilon$ can be efficiently computed in a single backward pass. In practice and for larger batch sizes, we find a single sample $\varepsilon$ per element in the batch to be sufficient for a good estimation of the divergence in the loss. As a result, this approach can be applied very efficiently for every deep neural network, without any requirements on the architecture.

The computation of the log-determinant is a usual problem in Normalizing Flows and has therefore been solved for most invertible neural network architectures. In iResNets \cite{behrmann_2019}, the authors use an approach that is also based on Hutchinson's trace estimator but approximates a truncated power series to compute the matrix logarithm in $\tr\log\Diff\model_{\!\!\!\!i}=\log|\det\Diff\model_{\!\!\!\!i}|$ for every layer individually. The log-determinant of the full network is then given as the sum of the individual layers' log-determinants.

\subsubsection{iResNets} \label{sct:iresnets}
To offer a guaranteed and stable inverse, the idea behind iResNets is to interpret residual networks as Euler discretizations of ODEs. As a consequence, one can leverage that those offer a guaranteed backward time if the right-hand side is Lipschitz continuous and the time steps are sufficiently small. Formally, the neural network layer
\begin{align*}
    \model_{\!\!\!i} = \Id - f_i
\end{align*}
is invertible if $\Lip(f_i)\leq L<1$ for some arbitrary and possibly non-linear residual block $f_i$. The inverse of a layer can be computed by a fixed point iteration that is guaranteed to converge by the Banach fixed point theorem. iResNets employ multiple of these layers sequentially. They have recently been shown to offer competitive performance for inverse problems \cite{Arndt_2025}. In our experiments with the reconstruction loss, the fixed point iteration is necessary in every training step. While 10-20 iterations are enough to obtain small reconstruction errors for most cases (depending on the choice of $L$), we usually perform 50-100 to guarantee comparability to the other approaches during our studies. To keep the amount of used memory feasible, we employ ideas from deep equilibrium models \cite{Bai_2019} for the backward pass.

\begin{figure}
    \centering
    \begin{subfigure}[b]{0.4\textwidth}
        \includegraphics[width=\textwidth]{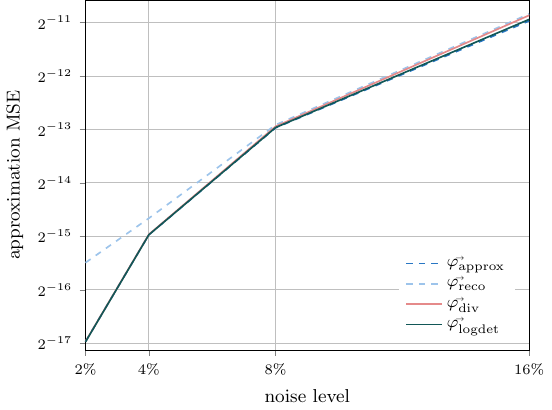}
    \end{subfigure}
    \quad
    \begin{subfigure}[b]{0.4\textwidth}
        \includegraphics[width=\textwidth]{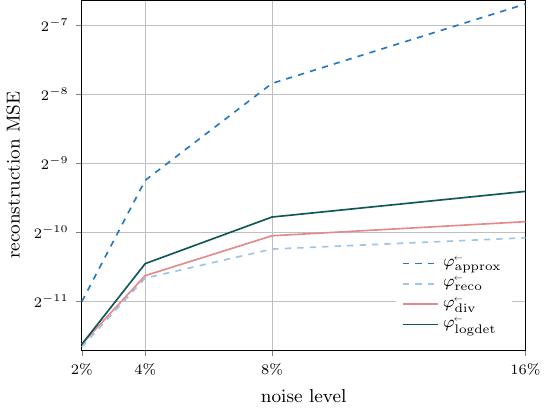}
    \end{subfigure}
    \caption{Reconstruction and forward operator approximation error (including noise) on the 2d dataset for $\varepsilon=\frac{1}{8}$: the divergence-regularized loss obtains small errors in both directions.}
    \label{fig:mse}
\end{figure}

\subsection{Interpretable regularization for a 2d problem}
To better understand the behavior of the models that result from the regularized training and verify our analytical results numerically, we study a 2-dimensional bimodal dataset. The prior $\prior$ is constructed from two polar transformed Gaussians and is identical to the distribution used for visualization purposes in the previous sections. We emphasize that the used prior is not log-concave, therefore we can use it to study the behavior of the divergence-regularization training beyond the theoretical guarantees for MAP estimation. We perform experiments for the case of denoising $\A=\Id$ as well as a series of forward operators
\begin{align*}
    \A_\varepsilon=\begin{pmatrix}
        1 & 1 \\ 1 & 1+\varepsilon
    \end{pmatrix},
\end{align*}
which are increasingly ill-posed for $\varepsilon\to0$ (cf. \cite{Maass_2019}). We further use several noise levels $\delta$ on this data during training and choose $\deltachosen=\delta$ for the two regularized loss functions.\footnote{During experiments, we verified the theoretical result that the data noise level does not influence the training outcome except for reconstruction training. To be consistent among the experiments, we nevertheless add noise for all training strategies.} We sample $4\times10^5$ points for training and $1\times10^5$ points from the same distribution for testing and use a network consisting of three invertible residual blocks, each with a Lipschitz constant of $L=0.99$ and consisting of two fully-connected hidden layers. 
For the 2d setting, the exact computation of log-determinant and divergence via backpropagation remains feasible -- therefore, we use these instead of their previously discussed statistical estimates.

\begin{figure}
  \centering
  \includegraphics[width=\textwidth]{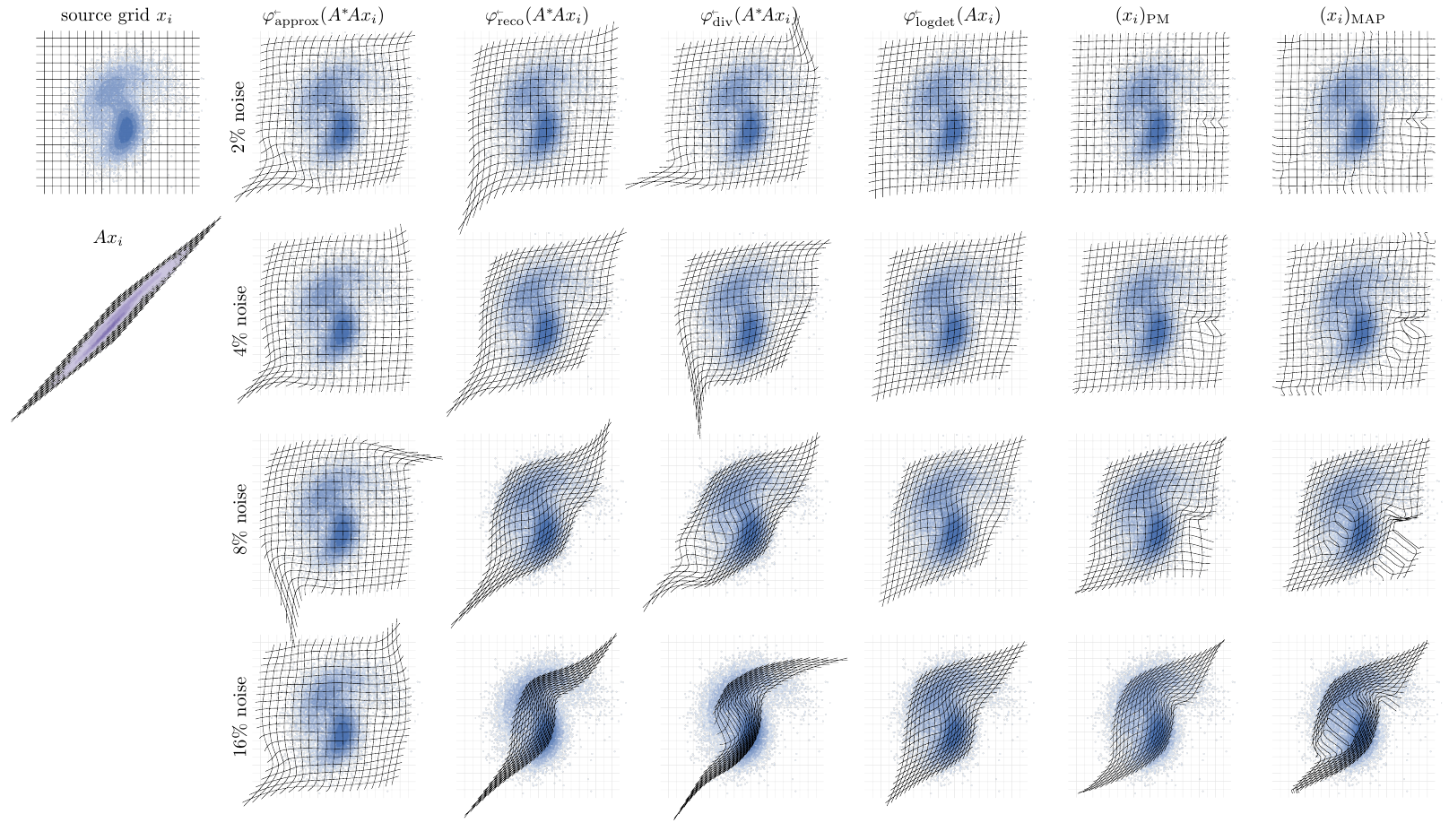}
  \caption{Grids reconstructed by the optimized networks for $\varepsilon=\frac{1}{2}$ and $\deltachosen=\delta$, depicted alongside the numerically computed posterior mean and MAP estimators. (For optimal visibility of details, please view the figure electronically and zoom in as needed.)}
  \label{fig:grids}
\end{figure}

For the ill-posed operator $\A_\frac{1}{8}$, we compare the errors for forward and inverse problem, see Figure \ref{fig:mse}. The invertibility of the architecture allows us to evaluate both, offering insights into the deviation of the learned reconstruction method from the known unstable forward operator. We notice that for all cases, the regularized approximation losses greatly improve the reconstruction error upon the standard approximation loss, while preserving approximation accuracy. Since the reconstruction MSE and approximation MSE match exactly the objectives for $\modelreco$ respectively $\modelapprox$, these models consistently perform best in these metrics. However, especially for small noise levels, both regularization methods performs almost equally well in terms of both, data consistency and reconstruction. We can therefore note that the divergence-regulation is a promising solution to our search for bi-directional training strategies.

Since we chose a 2d problem, we can interpret the learned methods by visualizing the reconstruction mappings alongside $\prior$. For the experiments with $\A_\frac{1}{2}$, we depict the models $\invmodel:\R^2\to\R^2$ by plotting the deformation of a grid through $\invmodel\circ\AadjA$ respectively $\invmodel\circ\A$ for the log-determinant regularization. This way, the grids become more dense in regions where the model is more likely to reconstruct, while the non-regularized inverse operator reconstructs the original equidistant grid. Therefore, the visualization allows an intuitive interpretation of the used regularization terms. The results are shown in Figure \ref{fig:grids}.

Generally, the action of $\deltachosen$ as a regularization parameter becomes visible: for small noise levels, the operator is well approximated, which implies a barely distorted grid. The contraction of the grid increases for increased $\deltachosen$, especially in the direction of the smaller singular value of $\A_\varepsilon$.
An especially interesting comparison is between the two regularized networks: In comparison with the log-determinant regularized network, the divergence-regularized network pushes the points explicitly towards the peaks of the prior, visible via a less dense grid between both peaks. This mirrors the properties of MAP in contrast to the posterior mean, which confirms our theoretical analysis and provides additional indication that the divergence-based regularization leads to high-quality reconstructions.
Meanwhile, $\modelogdet$ generally shows properties that are similar to the posterior mean, but creates smoother grids. This aligns with the theory in Remark \ref{thm:pm}.

As predicted by the theory, without regularization, the reconstructed grid is not influenced by the noise level and does not show any data-based regularization. Note that the model is only optimized where samples are located, and therefore there is some undetermined behavior in the less likely regions.

These results are validated and extended by the denoising problem in Figure \ref{fig:first_page_grids}: While approximation training implies no regularization, the regularization terms pull the reconstructed points towards the higher-density regions. Again, the MAP-like behavior of $\modeldiv$ is clearly visible. As discussed in Remark \ref{thm:pm}, we expect the log-determinant regularization to be close to the posterior mean, but act as a score step on a smoother distribution – this fits well with the visual comparison of $\modelogdet$ and $\modelreco$.

\section*{Conclusion}
We have introduced and analyzed two novel, interpretable regularization schemes for invertible neural networks in linear inverse problems. Building on the limitations of prior iResNet-based methods \cite{Arndt_2023,Arndt_2024}, we distilled insights from classical Bayesian point estimators to craft regularization terms for loss functions that both preserve forward‐map fidelity and inject principled, data‐dependent regularization.

Our first regularizer – a log‐Jacobian‐determinant penalty – draws directly from Normalizing Flow theory and yields a score‐based correction akin to posterior‐mean denoising. While effective, its construction for identical input and output spaces together with the introduced smoothing of the data distribution limit its overall applicability. To overcome this, we proposed a divergence‐based penalty that similarly promotes expansiveness in regions of high data density. We have shown analytically and numerically that it resembles properties of the MAP estimator and even coincides with it in convex settings. The resulting approach therefore allows to approximate MAP estimation by a pass through an invertible neural network, admitting stability as well as interpretability. To get a first and intuitive impression of the consequences of our regularization terms in practice, we study two-dimensional inverse problems, where we can visually understand the characteristics of different reconstruction methods. The results verify our developed theory.

Although our results are primarily theoretical, they derive an entirely new concept and therefore open several promising directions for future work. From an applied perspective, it would be valuable to extend divergence‑based regularization to nonlinear inverse problems—both in theory and in practice. In PDE‑based settings, where forward operators are nonlinear and computationally expensive, learned surrogate models are often embedded in classical reconstruction schemes. Augmenting these surrogates with a divergence penalty could encode data knowledge directly into the forward map; and since the divergence can be computed via automatic differentiation independent of the invertibility of the network, this approach would also apply to non‑invertible networks used alongside methods like Tikhonov regularization.
Another exciting opportunity lies in using invertible architectures to enhance interpretability in tasks beyond reconstruction - such as semantic segmentation or classification. By incorporating our regularization terms, one can steer learned mappings toward regions of high data fidelity, yielding more transparent reconstructions for given output samples.
Finally, divergence‑based training offers a principled way to learn invertible proximal operators in Plug‑and‑Play algorithms. Whereas existing approaches often train these operators to approximate the posterior mean \cite{meinhardt}, our loss aligns them with the correct MAP estimator, potentially improving both convergence and robustness when using data‑driven priors.

\section*{Acknowledgements}
Nick Heilenkötter acknowledges the support of the Deutsche Forschungsgemeinschaft (DFG, German Research Foundation) - Project number 281474342/GRK2224/2.


\end{document}